\numberwithin{equation}{section}
\numberwithin{figure}{section}
\theoremstyle{plain}
\newtheorem{thm}{\protect\theoremname}
\theoremstyle{remark}
\newtheorem{rem}[thm]{\protect\remarkname}
\theoremstyle{plain}
\newtheorem{cor}[thm]{\protect\corollaryname}
\theoremstyle{plain}
\theoremstyle{remark}
\renewcommand{\phi}{\varphi}
\renewcommand{\epsilon}{\varepsilon}
\renewcommand{\geq}{\geqslant}
\renewcommand{\leq}{\leqslant}
\renewcommand{\hat}{\widehat}
\renewcommand{\tilde}{\widetilde}
\providecommand{\corollaryname}{Corollary}
\providecommand{\remarkname}{Remark}
\providecommand{\theoremname}{Theorem}
\begin{document}
\title{ Machine Learning and Control Theory }
\author{Alain Bensoussan}
\address{International Center for Decision and Risk Analysis, Jindal School
of Management, The University of Texas at Dallas, TX75080, USA, and
School of Data Science, City University Hong Kong}
\email{alain.bensoussan@utdallas.edu, abensous@cityu.edu.hk}
\author{Yiqun Li}
\address{Department of Statistics, The Chinese University of Hong Kong, Hongkong.}
\email{yiqunli1991@gmail.com}
\author{Dinh Phan Cao Nguyen}
\address{Department of Mathematics, Southern Methodist University, Dallas,
TX 75275, USA and Faculty of Information Technology, Nha Trang University,
Vietnam}
\email{dpcnguyen5690@gmail.com}
\author{Minh-Binh Tran}
\address{Department of Mathematics, Southern Methodist University, Dallas,
TX 75275, USA}
\email{minhbinht@mail.smu.edu}
\author{Sheung Chi Phillip Yam}
\address{Department of Statistics, The Chinese University of Hong Kong, Hong
Kong}
\email{scpyam@sta.cuhk.edu.hk}
\author{Xiang Zhou}
\address{School of Data Science and Department of Mathematics, City University
of Hong Kong, Hong Kong}
\email{xizhou@cityu.edu.hk}

\begin{abstract}
We survey in this article the connections between Machine Learning
and Control Theory. Control Theory provide useful concepts and tools
for Machine Learning. Conversely Machine Learning can be used to solve
large control problems. In the first part of the paper, we develop
the connections between reinforcement learning and Markov Decision
Processes, which are discrete time control problems. In the second
part, we review the concept of supervised learning and the relation
with static optimization. Deep learning which extends supervised learning,
can be viewed as a control problem. In the third part, we present
the links between stochastic gradient descent and mean field theory.
Conversely, in the fourth and fifth parts, we review machine learning
approaches to stochastic control problems,and focus on the deterministic
case, to explain, more easily, the numerical algorithms. 
\end{abstract}

\maketitle

\section{Introduction}

The Big Data phenomenon is at the origin of a new expansion of Artificial
Intelligence. Machine learning \cite{Jordan2015Science} is a way
to implement AI, by providing the machine with the capability of learning
and decision making, which characterize humans. The fact that humans
use algorithms to help performing these two tasks is not new, by itself.
As soon as computing possibilities appeared, algorithms have been
developed. The ambition of AI came also early. However, during the
last decades, the momentum has been spectacular, and Machine Learning
has become the new Graal. Its introduction has revolutionized all
kinds of fields in science, in engineering, in medicine, in management.
Image processing, pattern recognition, text mining, speech recognition,
automatic translation have benefited considerably from this development.
An important breakthough occurred with the methodology of deep neural
network (DNN). 

Conceptually, since the objective is to improve the knowledge of environment
and improve decision making, we are naturally dealing with optimization
and statistics. This is clearly apparent in supervised learning.
On the other hand, reinforcement learning and DNN add an additional
variable, which is time or ordered like time. Control theory comes
in as the framework of dynamic optimization.

Control theory, see for instance, \cite{bensoussan2018book}, is about
how to design optimal actions for dynamical models, in continuous
or discrete time. However, it is notoriously acknowledged that the
numerical computation is the main barrier of putting these control
theories to work in practice and many applications are unfortunately
limited to the linear quadratic regulator. The curse of dimensionality
as Bellman, the creator of Dynamic Programming, coined it has been
haunting the numerical methods of control theory for quite a long
time. It is therefore natural that the new possibilities of ML be
considered to overcome the challenge of dimension. This explains why,
in the past few years, we have been witnessing many exciting ideas
and innovative results from the perspective of merging the above two
research areas, with the efforts from different communities like applied
and computational mathematics, optimal control, stochastic optimization
as well as computer science. The two sides, researchers from machine
learning and optimal control, start to explore the techniques, tools
as well as problem formulations, from each other. We can roughly divide
these works into two categories: \textit{control theory for machine
learning} and \textit{machine learning for control theory}. Generally
speaking, the former refers to the use of control theory as a mathematical
tool to formulate and solve theoretical and practical problems in
machine learning, such as optimal parameter tuning, training neural
network; while the latter is how to use machine learning practice
such as kernel method and DNN to numerically solve complex models
in control theory which can become intractable by traditional methods
(\cite{Han2018PNAS}).

There are many evidences to support our argument of close connections
between machine learning and control theory. We begin with reinforcement
learning (RL), which became famous when AlphaGo Zero \cite{alphagozero2017}
was invented. Reinforcement learning \cite{sutton2018reinforcement}
is a subfield of machine learning that studies how to use past data
to enhance the future manipulation of a dynamical system. The control
communities target for the same problems as RL. However, the RL and
control communities are practically disjoint due to the distinctive
language and culture; see \cite{Recht2019tour} for a recent effort
to unify this gap.

In RL, one of the simplest strategies is to first estimate such models
from the given data, which is called system identification in control
community. This can be achieved by supervised learning \cite{Chiuso2019}.
Then in the second stage dynamical programming principle in control
theory can be applied and to derive many popular RL algorithms such
as Q-learning and Temporal Difference algorithms \cite{sutton2018reinforcement}.

As said above, Dynamic Programming is hard to implement numerically,
for high dimensional dynamic systems. Machine learning and DNN can
be helpful. For example, \cite{Han2018PNAS} proposed an efficient
machine learning algorithm by using DNN to approximate the value function
in the high dimensional Hamilton-Jacobi-Bellman equation, based on
the equivalent stochastic control formulation of the PDE.

The bond that ties machine learning and control theory more closely
in recent years gets critically strengthened from continuous perspective
in various contexts \cite{weinan2017proposal,E2019MLcontinuous,Recht2019tour}.
For example, deep residual neural network (ResNet) \cite{HeResNet}
can be obtained by recasting it as dynamical systems with network
layers considered as time discretization (\cite{chang2017reversible,chang2017multi,chen2018neural,haber2017stable,li2018optimal,li2017deep,sonoda2017double}).
Based on this point of view, machine learning algorithm for ResNet
can be viewed as part of static and dynamic optimization for an ordinary
differential equation controlled by network parameters \cite{E2019RMS}.
This continuous model immediately triggered several new training methods
based on well-known techniques in control theory: \cite{li2017maximum}
from the Pontryagin Maximum Principle and \cite{chen2018neural} from
the adjoint approach. This viewpoint of continuous modelling is also
becoming more and more popular in optimization community for machine
learning, particularly for the stochastic gradient descent(SGD), in
which a stochastic differential equation (SDE) emerges as the continuous
model \cite{li2019stochastic}. The acceleration of the SGD is then
regarded as an optimal control problem for the SDE to reach minimum
point as early as possible \cite{li2017stochastic}. The contribution
of control theory is certainly not restricted to the training algorithm.
For RL, the trade-off between exploration and exploitation is a very
serious and daunting practical problem. \cite{wang2018exploration}
recently studied the analysis of this problem in theory through the
lens of stochastic control. Similarly, the need to provide a solid
mathematical framework to analyze deep neural networks is very pressing.
Recent works have pointed out that new mathematical properties of
deep neural network can be obtained by recasting deep learning as
dynamical systems (cf. \cite{chang2017reversible,chang2017multi,chen2018neural,haber2017stable,li2017maximum,li2018optimal,li2017deep,lu2017beyond,sonoda2017double}).

Nowadays, it is difficult to ignore the intervene and synthesis between
machine learning and control theory and the fusion of these two fields
at certain boundaries is pushing forward tremendous research progress
with accelerating momentum now. This paper is to give a brief introduction
to and a short review of some selective works on the overlap of these
communities. The interaction between the data-driven approach in machine
learning and the model-based control theory is still at the very early
age and there are certainly many challenges at the control-learning
interface to advance the deeper development both in theory and in
practice. We hope that the gap between the learning-centric views
of ML and the model-centric views of control can diminish in the foreseen
future on an arduous journey of understanding machine learning and
artificial intelligence. As a result, a new territory may emerge (e.g.
\textit{actional intelligence} in \cite{Recht2019tour}) from these
joint efforts across the disciplines.

In the first part of the article, we discuss Markov Decision Processes
(MDP), which normally provide mathematical frameworks for modeling
decision making in stochastic environment where outcomes are partly
random and partly under the control of a decision maker. MDPs can
indeed be solved via Dynamic Programming and provide a very useful
framework for Reinforcement Learning.

The second part of the article is devoted to Supervised Learning and
Deep Learning, that concerns the approximation of a function given
some preliminary observations. Supervised Learning is an optimization
problem. Deep Learning can be recast into a control theory problem
and can be solved using various strategies, including the Pontryagin
Maximum Principle approach.

Recent mean field and stochastic control views for Stochastic Gradient
Descent methods will be provided in the third part of this paper.

In the next section, we study a Stochastic Control Problem, in which
the state is that of a controlled diffusion. We then propose a Machine
Learning approach for this problem.

We finally focus on the deterministic case in section 6 to simplify
the theory. We provide some related theoretical results in companion
with a few high-dimensional numerical illustrations to demonstrate
the effectiveness of the algorithms.

\section{Reinforcement learning}

\subsection{General concepts}

In the language of Control Theory, we consider a dynamical system,
which evolves in an uncertain environment. The evolution of this system
is called a process, which can be characterized by its state. A controller
decides a strategy of actions, called feedback, and there is at each
time a cost or profit attached to the current state and the currrent
action. In the language of RL, every time the action is made, the
controller receives an award. The controller will try to choose the
actions such that the sum of rewards is maximized. Since time is discrete,
the control problem is called a Markov Decision Process (MDP) and
can be solved by Dynamic Programming approach. The award is then a
function of the state and the action.

\subsection{Mathematical model without action}

We suppose that the states belong to a space $X$. On $X$, there
is a $\sigma$-algebra, denoted by $\mathcal{X}$. A transition probability
is a (regular enough) function $\pi(x;\Gamma)$ on $(X,\mathcal{X})$.
For any fixed $x$ we define the probability of $\Gamma\in\mathcal{X}$
to be $\pi(x;\Gamma)$. If $B$ is the space of bounded functions
on $X,$ equipped with the norm $\|f\|=\sup_{x}|f(x)|$, we associate
to the transition probability a linear operator $\Phi$ from $B$
to $B$ as follows:

\begin{equation}
\Phi f(x)=\int_{X}f(\eta)\pi(x;\mathrm{d}\eta),\label{eq:1-1}
\end{equation}
and clearly $||\Phi||\leq1.$ A Markov chain $\{X_{i}\}_{i=1}^{\infty}$
on $X$ with transition probability $\pi(x;\mathrm{d}\eta)$ is a
stochastic process on $X$ such that 
\begin{equation}
\mathbb{E}[f(X_{n+1})|X_{n}=x]=\int_{X}f(\eta)\pi(x;\mathrm{d}\eta),\text{ for }n=1,2,\cdots.\label{eq:1-2}
\end{equation}
Assuming stationarity in \eqref{eq:1-2}, and choosing $\alpha$ to
be a discount factor, we can express the function

\begin{equation}
u(x)=\mathbb{E}\left[\sum_{n=1}^{+\infty}\alpha^{n-1}f(X_{n})\Bigg|X_{1}=x\right].\label{eq:1-3}
\end{equation}
This is the sum of rewards, in the terminology of RL. There is no
action to modify the trajectory. We just add the discounted rewards.
We can give an explicit analytic expression ( not probabilist) of
the function $u(x)$. It is the unique solution of the analytic problem
\begin{equation}
u=f+\alpha\Phi u.\label{eq:1-4}
\end{equation}
It then follows that 
\begin{equation}
u=(I-\alpha\Phi)^{-1}f,\label{eq:1-5}
\end{equation}
and using the generator $\Phi$, we can also rewrite, 
\begin{equation}
u=\sum_{n=1}^{\infty}\alpha^{n-1}\Phi^{n-1}f.\label{eq:1-6}
\end{equation}

\subsection{Approximation}

Our main task now is to compute the function $u(x).$ The equations
\eqref{eq:1-4} and \eqref{eq:1-6} are explicit and straightforward.
However, the challenge is when the dimension $d$ of $X$ is large,
these formulations are not of practical use. Here comes the other
aspect of machine learning: how to approximate a function given by
formulae \eqref{eq:1-3} or \eqref{eq:1-4}. Since Supervised Learning
does exactly that, approximate a function, we follow the ideas of
SL. There are basically two methods, the \textit{parametric} method
and the \textit{non parametric }method. In the parametric method,
we look for an approximation of the form

\begin{equation}
u(x)\approx\sum_{i=1}^{I}\theta_{i}\varphi_{i}(x),\label{eq:2-1}
\end{equation}
where $\varphi_{i}(x)$ are given functions, so that the family $\{\varphi_{i}(\cdot)\}_{i=1}^{\infty}$
forms a basis of the functional space to which $u(x)$ belongs, and
$\theta_{i}$ are coefficients to be determined. We need to compute
parameters minimizing the error

\begin{equation}
\left\Vert \sum_{i=1}^{I}\theta_{i}(\varphi_{i}-\alpha\Phi\varphi_{i})-f\right\Vert ^{2},\label{eq:2-2}
\end{equation}
where $\|\cdot\|$ is the sup-norm $\|G\|=\sup|G|$. To guarantee
the existence and uniqueness of the parameters, we minimize the quadratic
functional, with a quadratic regularization.

\begin{equation}
\gamma\sum_{i=1}^{I}\theta_{i}^{2}+\left\Vert \sum_{i=1}^{I}\theta_{i}(\varphi_{i}-\alpha\Phi\varphi_{i})-f\right\Vert ^{2}.\label{eq:2-3}
\end{equation}
In the non parametric method, used in supervised learning, we do
not refer to a functional equation for $u(x).$ We assume that we
can compute the value at a finite number of points. For a given $x$,
$u(x)$ can be calculated by formula \eqref{eq:1-3}, by Monte Carlo
simulation. We then find 
\begin{equation}
u(x)\approx\dfrac{1}{N}\sum_{\nu=1}^{N}\sum_{n=1}^{+\infty}\alpha^{n-1}f(X_{n}^{\nu}),\label{eq:2-4}
\end{equation}
where $X_{1}^{\nu}=x,\cdots,X_{n}^{\nu}=X_{n}(\omega^{\nu}),\cdots,$
represents one trajectory indexed by $\nu$ of the Markov chain, corresponding
to one sample point $\omega^{\nu}$. We choose $M$ points $x^{1},\cdots,x^{M}$
in $\mathbb{R}^{d}$, and then compute $u(x^{1})=y^{1},\cdots u(x^{M})=y^{M}$
by using Monte-Carlo method and the approximation formula \eqref{eq:2-4}.
The number $M$ is chosen arbitrarily. If we assume that $f$ is continuous
and bounded, then $u(x)$ is also bounded and continuous. The goal
is to extrapolate $u(x)$ from the knowledge of $y^{1},\cdots,y^{M}.$

We now choose a subset $\mathcal{H}$ of $C(\mathbb{R}^{d})$. This
subset is called the hypothesis space. We select an element in $\mathcal{H}$
such that it is the closest possible to $y^{1},\cdots,y^{M}$ at points
$x^{1},\cdots,x^{M}.$ We assume naturally that $\mathcal{H}$ is
a nice enough functional space. The theory of reproducing kernels
allows us to define $\mathcal{H}$ as a Hilbert space, with a continuous
injection in $C(\mathbb{R}^{d}$). The function $u(x)$ can be defined
as the solution of the minimization problem 
\begin{equation}
\min_{u\in\mathcal{H}}\left\{ \gamma\|u\|_{\mathcal{H}}^{2}+\sum_{m=1}^{M}(u(x^{m})-y^{m})^{2}\right\}.\label{eq:2-5}
\end{equation}

\begin{rem}
\label{rem2-1}In RL, one claims that a significant difference with
MDP is that the Markov Chain may not be known. The controller, however
makes trials, which is similar to MonteCarlo, whithout referring to
a selection of trajectories according to a given probability transition. 
\end{rem}

\subsection{Mathematical model with action}

The Markov chain has a probability transition depending on an auxiliary
variable $a$ called the action, $\pi(x,a;\:\mathrm{d}\eta)$. When
the action is a function of the state, also called the feedback, $a(x),$
we then get $\pi(x,a(x);\mathrm{d}\eta)$. We now define the operator
$\Phi^{a}f(x)$ or $\Phi^{a(x)}f(x)$ by 
\begin{equation}
\Phi^{a}f(x)=\int_{X}f(\eta)\pi(x,a;\mathrm{d}\eta);\quad\Phi^{a(x)}f(x)=\int_{X}f(\eta)\pi(x,a(x);\mathrm{d}\eta).\label{eq:2-6}
\end{equation}
We consider an award depending on the state and the action $f(x,a)$.
For convenience, this award is supposed to be a cost and we assume
$f(x,a)\geq0.$ We then set the aggregate cost to be 
\begin{equation}
J_{a(\cdot)}(x)=\mathbb{E}\left[\sum_{n=1}^{+\infty}\alpha^{n-1}f(X_{n},a(X_{n}))\Bigg|X_{1}=x\right],\label{eq:2-7}
\end{equation}
in which $X_{n}$ evolves as a function of the probability transition
$\pi(x,a(x);\mathrm{d}\eta).$ We also define the value function 
\begin{equation}
u(x)=\inf_{a(\cdot)}J_{a(\cdot)}(x),\label{eq:2-8}
\end{equation}
which is the solution of the following Bellman equation 
\begin{equation}
u(x)=\inf_{a}[f(x,a)+\alpha\Phi^{a}u(x)].\label{eq:2-9}
\end{equation}
It is also interesting to introduce the cost ($Q$-function) when
the first action is arbitrary and the following actions are optimized,
namely 
\begin{equation}
Q(x,a)=f(x,a)+\alpha\Phi^{a}u(x).\label{eq:2-10}
\end{equation}
Taking into account the fact that 
\begin{equation}
u(x)=\inf_{a}Q(x,a),\label{eq:2-11}
\end{equation}
we arrive at 
\begin{equation}
Q(x,a)=f(x,a)+\alpha[\Phi^{a}(\inf_{a'}Q(\cdot,a'))](x).\label{eq:2-12}
\end{equation}
There are two basic types of iteration to solve the above Bellman
equation, the \textit{value iteration} and the \textit{policy iteration}.
The \textit{value iteration} is defined by 
\begin{equation}
u_{k+1}(x)=\inf_{a}[f(x,a)+\alpha\Phi^{a}u_{k}(x)],\label{eq:2-13}
\end{equation}
and $u_{0}(x)=0.$ When $f(x,a)$ is bounded, the solution of the
Bellman equation is unique and the sequence $u_{k}(x)$ converges
to the the value function monotonically. When $f(x,a)$ is not bounded,
the solution of the Bellman equation is not unique. The sequence $u_{k}(x)$
converges monotonically to the value function, which is the minimum
solution. We can also interpret $u_{k}(x)$ as the value function
for the control problem with $k$ periods. To see this, we define
\begin{equation}
J_{a(\cdot)}^{k}(x)=\mathbb{E}\left[\sum_{n=1}^{k}\alpha^{n-1}f(X_{n},a(X_{n}))\Bigg|X_{1}=x\right],\label{eq:2-14}
\end{equation}
then 
\begin{equation}
u_{k}(x)=\inf_{a(\cdot)}J_{a(\cdot)}^{k}(x).\label{eq:2-15}
\end{equation}
On the other hand, the \textit{policy iteration} technique starts
with a given feedback control $a^{k}(x)$ and solves the linear (fixed
point) problem similar to \eqref{eq:1-4} 
\begin{equation}
u^{k+1}(x)=f(x,a^{k}(x))+\alpha\Phi^{a^{k}(x)}u^{k+1}(x).\label{eq:2-16}
\end{equation}
With the $u^{k+1}(x)$, then $a^{k+1}(x)$ is defined by 
\[
\inf_{a}[f(x,a)+\alpha\Phi^{a}u^{k+1}(x)],
\]
in which we start with a function $a^{0}(x)$, which minimizes 
\[
\inf_{a}f(x,a).
\]
In both types of iterations, we have to solve the respective minimization
problems\\
 (i) value iteration: $\quad\quad\quad\inf_{a}[f(x,a)+\alpha\Phi^{a}u_{k}(x)];$\\
 (ii) policy iteration: $\quad\quad\quad\inf_{a}[f(x,a)+\alpha\Phi^{a}u^{k+1}(x)],$\\
 in which $u_{k}(x)$ and $u^{k+1}(x)$ are known respectively (obtained
a-priori by some approximation method such as the use of reproducing
kernals).

For both cases, the minimization problem can be resolved by a gradient
decent technique. We set 
\[
Q^{k+1}(x,a)=f(x,a)+\alpha\Phi^{a}u^{k+1}(x)
\]
which is an approximation of $Q(x,a).$

Since we cannot find the infimum exactly, we use the following approximation
for $a^{k+1}(x)$ 
\begin{equation}
a^{k+1}(x)=a^{k}(x)-\rho^{k}D_{a}Q^{k+1}(x,a^{k}(x)),\label{eq:2-17}
\end{equation}
in which the coefficient $\rho^{k}$ is chosen such that it solves
the following scalar optimization problem 
\begin{equation}
\inf_{\rho}Q^{k+1}(x,a^{k}(x)-\rho D_{a}Q^{k+1}(x,a^{k}(x))).\label{eq:2-18}
\end{equation}
By definition, 
\[
u^{k+1}(x)=Q^{k+1}(x,a^{k}(x)),
\]
We can then, instead of solving the linear problem for $u^{k+1}(x)$,
use the apprximation $Q^{k}(x,a^{k}(x)).$ We then proceed as follows:
knowing $a^{k}(x)$ and $Q^{k}(x,a)$, we define 
\begin{equation}
\begin{cases}
\bar{u}^{k+1}(x):=Q^{k}(x,a^{k}(x)),\label{eq:2-19}\\
Q^{k+1}(x,a)=f(x,a)+\alpha\Phi^{a}\bar{u}^{k+1}(x),
\end{cases}
\end{equation}
and $a^{k+1}(x)$ can then be obtained approximately through \eqref{eq:2-17}
and \eqref{eq:2-18}. In the above procedure, we start with $Q^{0}(x,a)=f(x,a)$
and $a^{0}(x)$ is chosen to be the minimizer of $f(x,a).$

\section{Control theory and deep learning}

\subsection{Supervised learning}

Supervised learning concerns basically approximating an unknown function
$F(x):\:\mathbb{R}^{d}\rightarrow\mathbb{R},$ given some noisy observations
$y^{m}=F(x^{m})+\epsilon^{m},$ in which $x^{m}$ is known and the
noise $\epsilon^{m}$ models the uncertainty (unknown). There are
two methods that can be used to solve this approximation problem.
In the first method, we consider a function $f(x;\theta),$ where
$\theta\in\mathbb{R}^{n}$ for some $n$ and we try to perform a minimization
problem with the parameter $\theta$ 
\begin{equation}
\min_{\theta}\left\{ \gamma|\theta|^{2}+\sum_{m=1}^{M}(f(x^{m};\theta)-y^{m})^{2}\right\}.\label{eq:3-1}
\end{equation}
This is the well-known \textit{parametric method}. In the simplest
case of neural networks, the function $f(x;\theta)$ is defined as
follows:

We first introduce $X=\chi(x),$ where $\chi:\mathbb{R}^{d}\rightarrow\mathbb{R}^{n}$.
Of course $\chi$ can be identity. Now, choose $\sigma$ to be a
scalar function, called the \textit{activation function} and $W$
to be a matrix in $\mathcal{L}(\mathbb{R}^{n};\mathbb{R}^{d})$. Suppose
that $b$ is a vector in $\mathbb{R}^{n}.$ The pair $(W,b)$ represents
the parameter $\theta.$ We now define the vector $\tilde{X}$ by
\begin{equation}
\tilde{X}_{i}=\sigma\left(\sum_{j=1}^{d}W_{ij}X_{j}+b_{i}\right),\label{eq:3-2}
\end{equation}
and 
\begin{equation}
f(x;\theta)=g(\tilde{X}),\label{eq:3-3}
\end{equation}
where $g:\mathbb{R}^{n}\rightarrow\mathbb{R}$ is the output function.
The minimization in \eqref{eq:3-1} is performed by a gradient method
with iterative application of chain rules.

In the \textit{non-parametric method}, particularly, the \textit{kernel
method}, one finds a functional space $\mathcal{H}$, to which the
approximation of $F(x)$ belongs. One then chooses that approximation
$f(x)$ by solving 
\begin{equation}
\min_{f\in\mathcal{H}}\left\{ \gamma\|f\|_{\mathcal{H}}^{2}+\sum_{m=1}^{M}(f(x^{m})-y^{m})^{2}\right\}.\label{eq:3-4}
\end{equation}

\subsection{Deep learning}

Deep learning is a generalization of supervised learning with a sequence
of layers. We generalize \eqref{eq:3-3} to $K$ layers as follows
with $X^{k}\in\mathbb{R}^{n},\theta^{k}:=(W^{(k+1)},b^{(k+1)})\in\mathcal{L}(\mathbb{R}^{n};\mathbb{R}^{n})\times\mathbb{R}^{n},$
\begin{equation}
X^{k+1}=f_{k}(X^{k};\theta^{k}),\:k=0,\cdots K-1,\label{eq:3-5}
\end{equation}
where 
\begin{align*}
f_{k}(X^{k};\theta^{k}):=\sigma(W^{(k+1)}X^{k}+b^{(k+1)})
\end{align*}
and 
\begin{equation}
X^{0}=\chi(x),\label{eq:3-6}
\end{equation}
with 
\begin{equation}
f(x,\theta)=g(X^{K}).\label{eq:3-7}
\end{equation}
The parameter $\theta$ is the collection of $\{\theta^{0},\cdots\theta^{K-1}\}.$
We have written the case of several layers of neural networks, but
it is just an example. 

\subsection{Control theory approach}

This approach has been introduced by Li, Tai, E \cite{li2015dynamics,li2017stochastic,li2019stochastic}.
The idea is to consider a continuous time extension of \eqref{eq:3-5},
\eqref{eq:3-6} and \eqref{eq:3-7}. We write 
\begin{equation}
\begin{cases}
\dfrac{\mathrm{d}X}{\mathrm{d}t}=f(X_{t},\theta_{t},t),\\
X_{0}=\chi(x).
\end{cases}\label{eq:3-8}
\end{equation}
The approximation of $F(x)$ is then $f(x,\theta)=g(X_{T}).$ The
loss in this scenario is $(y-g(X_{T}))^{2}=\Phi(X_{T}),$ recalling
that $y$ will be a known value. The idea is to consider $\theta_{t}$
as a control and we want to minimize an analog of \eqref{eq:3-4},
which is expressed as: 
\begin{equation}
J(\theta):=\sum_{m=1}^{M}\Phi(X_{T}^{m})+\int_{0}^{T}L(\theta_{t})\mathrm{d}t,\label{eq:3-9}
\end{equation}
where $L(\theta)$ is a regularization function, for instance, $\gamma\|\theta\|^{2}$.
Using a Pontryagin Maximum Principle approach, we can write a necessary
condition of optimality for the control $\hat{\theta}_{t}.$ Define
$(\hat{X}_{t}^{m},\hat{p}_{t}^{m})$, we have the optimal state and
optimal adjoint state solutions of 
\begin{equation}
\left\{ \begin{aligned} & \dfrac{\mathrm{d}\hat{X_{t}}^{m}}{\mathrm{d}t}=f(\hat{X}_{t}^{m},\hat{\theta}_{t},t),\;\quad\hat{X}_{0}^{m}=x^{m},\;\\
 & -\dfrac{\mathrm{d}\hat{p}_{t}^{m}}{\mathrm{d}t}=(D_{x}f)^{*}(\hat{X}_{t}^{m},\hat{\theta}_{t},t)\hat{p}_{t}^{m},\quad\hat{p}_{T}^{m}=D_{x}\Phi(\hat{X}_{T}^{m}),
\end{aligned}
\right.\label{eq:3-10}
\end{equation}
and the optimality condition 
\[
\hat{\theta}_{t}\;\text{minimizes }H(\hat{X}_{t},\hat{p}_{t},\theta,t),\:\text{a.e.}\:t,
\]
with 
\[
H(\hat{X}_{t},\hat{p}_{t},\theta,t)=\sum_{m=1}^{M}\hat{p}_{t}^{m}f(\hat{X}_{t}^{m},\theta,t)+L(\theta).
\]
To solve \eqref{eq:3-10}, one can use the following approximation
recursively: let $\hat{\theta}_{t}^{k}$ be given, define $(\hat{X}_{t}^{m,k},\hat{p}_{t}^{m,k})$
by 
\begin{equation}
\left\{ \begin{aligned} & \dfrac{\mathrm{d}\hat{X_{t}}^{m,k}}{\mathrm{d}t}=f(\hat{X}_{t}^{m,k},\hat{\theta}_{t}^{k},t),\;\quad\hat{X}_{0}^{m,k}=x^{m},\;\\
 & -\dfrac{\mathrm{d}\hat{p}_{t}^{m,k}}{\mathrm{d}t}=(D_{x}f)^{*}(\hat{X}_{t}^{m,k},\hat{\theta}_{t}^{k},t)\hat{p}_{t}^{m,k},\quad\hat{p}_{T}^{m,k}=D_{x}\Phi(\hat{X}_{T}^{m,k}).\label{eq:3-11}
\end{aligned}
\right.
\end{equation}
We look for $\hat{\theta}_{t}^{k+1}$ that minimizes 
\begin{equation}
\sum_{m}\hat{p}_{t}^{m,k}f(\hat{X}_{t}^{m,k},\theta,t)+L(\theta).\label{eq:3-12}
\end{equation}
Note that the above approximation may fail to converge. We refer to
\cite{E2019RMS,E2019MLcontinuous,Han2016,sutton2018reinforcement}
for recent improvements and techniques to deal with this issue.

\section{stochastic gradient descent and control theory}

\subsection{Comments}

The gradient descent algorithm plays an essential role in the various
parts of ML, as we have seen above. There has been a considerable
amount of work in order to improve its efficiency. The stochastic
version it, described below, offers another example of connection
to control theory, stochastic control and even mean field control
theory. We limit ourselves to some basic considerations. In particular
we do not discuss the case of the use of SG for DNN, as in \cite{mei2018mean}.
This is because the connection is of a different nature. It does not
involve control theory, but introduces interesting PDE. In this section, 
we will define SG and relate the choice of the optimal descent parameters
to an MDP problem. We will then give a continuous version and also
connect with mean field control. 

\subsection{Stochastic Gradient and MDP }

We recall the defintion of gradient descent. If $f(x)$ is a function
on $\mathbb{R}^{d},$ for which we want to find a minimum $x^{*}.$ The gradient
descent algorithm is defined by the sequence 

\begin{equation}
x_{k+1}=x_{k}-\eta_{k}Df(x_{k}),\label{eq:4-1}
\end{equation}
where $\eta_{k}$ is a positive constant, which can be chosen independent
of $k$. This is simpler, but by all means not optimal. Suppose now
that the function $f(x)$ is an expected value 

\begin{equation}
f(x)=\mathbb{E}(f(x,Z)).\label{eq:4-100}
\end{equation}
Applying the gradient method to this function leads to 

\[
x_{k+1}=x_{k}-\eta_{k}\mathbb{E}(D_{x}f(x_{k},Z)).
\]
In the SG descent method, one chooses a sequence of independent versions
of $Z$, called $Z_{k}$ and define 

\begin{equation}
X_{k+1}=X_{k}-\eta_{k}D_{x}f(X_{k},Z_{k}).\label{eq:4-101}
\end{equation}
We clearly obtain a controlled Markov chain, in which $\eta_{k}$
is the control. If we define the $\sigma$ -algebra $\mathcal{F}^{k}=\sigma(Z_{1},\cdots,Z_{k})$,
then $\eta_{k}$ is adapted to $\mathcal{F}^{k-1},$ $k\geq1.$ $\mathcal{F}^{0}$
is the trivial $\sigma$-algebra. The process $X_{k}$ is also adapted
to $\mathcal{F}^{k-1}.$ We have to define the pay off to optimize.
Suppose we stop at $K.$ We naturally want $X_{K}$ as close as possible
to $x^{*}.$ One way to proceed would be to minimize $\mathbb{E}f(X_{K}).$
But this requires the computation of $f(x)$, for random values
of the argument, which we want to avoid. In fact, if we insure that
$X_{K}$ is close to a constant, that constant will be necessarily
$x^{*},$ provided $\eta_{k}$ is larger than a fixed positive constant.
So a good criterion will be to minimize 

\begin{equation}
\mathbb{E}|X_{K}-EX_{X}|^{2}.\label{eq:4-102}
\end{equation}
This is not a standard MDP, but a Mean Field type control problem
in discrete time.

\subsection{CONTINUOUS VERSION}

We first write (\ref{eq:4-101}) as follows: 

\[
X_{k+1}=X_{k}-\eta_{k}Df(X_{k})+\eta_{k}\widetilde{Y}_{k},
\]
 with 

\[
\widetilde{Y}_{k}=Df(X_{k})-D_{x}f(X_{k},Z_{k}).
\]
 Note that we have $\mathbb{E}[\widetilde{Y}_{k}|\mathcal{F}^{k-1}]=0$ and 

\[
\mathbb{E}[\widetilde{Y}_{k}(\widetilde{Y}_{k})^{*}|\mathcal{F}^{k-1}]=\Sigma(X_{k}),
\]
 with 

\begin{equation}
\Sigma(x)=\mathbb{E}(D_{x}f(x,Z)(D_{x}f(x,Z))^{*})-Df(x)(Df(x))^{*}.\label{eq:4-103}
\end{equation}
We shall write 

\begin{equation}
\Sigma(x)=\sigma(x)\sigma(x)^{*}.\label{eq:4-104}
\end{equation}
If we write $\widetilde{Y}_{k}=\sigma(X_{k})Y_{k}$, then the process
$Y_{k}$ satisfies 

\begin{equation}
\mathbb{E}[Y_{k}|\mathcal{F}^{k-1}]=0,\;\mathbb{E}[Y_{k}(Y_{k})^{*}|\mathcal{F}^{k-1}]=0.\label{eq:4-105}
\end{equation}
 We obtain the algorithm 

\begin{equation}
X_{k+1}=X_{k}-\eta_{k}Df(X_{k})+\eta_{k}\sigma(X_{k})Y_{k}.\label{eq:4-106}
\end{equation}
We can then follow Li, Tai and E \cite{li2017stochastic} to define
a diffusion approximation of (\ref{eq:4-106}) as follows 

\begin{equation}
dX=-u(t)Df(X(t))dt+u(t)\eta\sigma(X(t))dB(t)\label{eq:4-107}
\end{equation}
where $B(t)$ is a brownian motion and $u(t)$ adapted to the filtration
generated by the brownian motion, with values in $[0,1].$ The number
$\eta$ is a scaling constant. We can then choose the control to minimize
the payoff $\mathbb{E}f(X(T))$ which defines a stochastic control problem,
or 

\begin{equation}
\min_{u(\cdot)\geq u_{0}>0}\mathbb{E}\left(|X(T)-\mathbb{E}X(T)|^{2}\right),\label{eq:4-108}
\end{equation}
which defines a mean field type control problem. 

\section{Machine learning approach of stochastic control problems}

\subsection{General theory}

Let us now consider the following problem, in which the state equation
is a controlled diffusion 
\begin{equation}
\begin{cases}
\mathrm{d}x(t)=g(x(t),a)\mathrm{d}t+\mathrm{d}w,\label{eq:4-1}\\
x(0)=x,
\end{cases}
\end{equation}
and the pay-off is given by 
\begin{equation}
J_{x}(a(\cdot))=\mathbb{E}\left[\int_{0}^{+\infty}\exp(-\alpha t)\:f(x(t),a(t))\mathrm{d}t\right].\label{eq:4-2}
\end{equation}
There are two approaches to resolve the above problem: \textit{Dynamic
Programming} and \textit{Stochastic Pontryagin Maximum Principle}.
The theory shows that the optimal control is described by a feedback
. The value function is defined by 
\begin{equation}
u(x)=\inf_{a(\cdot)}J_{x}(a(\cdot)).\label{eq:4-3}
\end{equation}
In the above problem, there are 3 functions of interest, the value
function $u(x),$ the optimal feedback $\hat{a}(x)$ (if exists),
the gradient of $u(x)$, $\lambda(x)=Du(x).$ Introducing $u(x)$
and its gradient independently may look superfluous. It turns out
that the gradient has a very interesting interpretation, the shadow
price in economics. Surprisingly, the gradient is solution of a self-contained
vector equation. On the numerical side, approximating the gradient
of $u(x)$ by the gradient of the approximation of $u(x)$ results
in a source of errors. This justifies the interest in the system of
equations for $\lambda(x).$ We may think of \textit{parametric and
non-parametric approximations} for these functions. We shall discuss
a parametric approach for the optimal feedback, and a non-parametric
approach for the value function and its gradient.

\subsection{Parametric approach for the feedback}

Let us now replace candidacy of $a(x)$ by a special function of the
form $\tilde{a}(x,\theta)$ with $\theta$ being a parameter in $\mathbb{R}^{p}$.
The function $g(x,a)$ is then replaced by $g(x,\tilde{a}(x,\theta))$,
which can be renamed as $g(x,\theta)$ by abuse of notation. We obtain
the control problem 
\begin{equation}
\begin{cases}
\mathrm{d}x(t)=g(x(t),\theta)\mathrm{d}t+\mathrm{d}w,\label{eq:4-4}\\
x(0)=x,
\end{cases}
\end{equation}
\begin{equation}
J_{x}(\theta(\cdot))=\mathbb{E}\left[\int_{0}^{+\infty}\exp(-\alpha t)\:f(x(t),\theta(t))\mathrm{d}t\right],\label{eq:4-5}
\end{equation}
where $f(x,\theta)$ abbreviates for $f(x,\tilde{a}(x,\theta)).$
The important simplification of this procedure is that $\theta(t)$
is regarded as deterministic.

We can write a necessary condition of optimality for the optimal new
control $\hat{\theta}(t)$. Define the optimal state $\hat{x}(t)$
and the adjoint state $\hat{p}(t)$ by: 
\begin{equation}
\begin{cases}
\mathrm{d}\hat{x}=g(\hat{x},\hat{\theta})\mathrm{d}t+\mathrm{d}w,\quad\hat{x}(0)=x,\label{eq:4-6}\\
-\dfrac{\mathrm{d}\hat{p}}{\mathrm{d}t}+\alpha\hat{p}=g_{x}^{*}(\hat{x},\hat{\theta})\hat{p}+f_{x}(\hat{x},\hat{\theta}),
\end{cases}
\end{equation}
and $\hat{\theta}(t)$ satisfies 
\begin{equation}
\inf_{\theta}\mathbb{E}\,[\hat{p}(t)\cdot g(\hat{x}(t),\theta)+f(\hat{x}(t),\theta)],\:t\text{-a.e.}.\label{eq:4-7}
\end{equation}
To obtain $\hat{\theta}(t)$, we can use an iterative approximation
coupled with a gradient method 
\begin{equation}
\begin{cases}
\mathrm{d}\hat{x}^{k}=g(\hat{x}^{k},\hat{\theta}^{k})\mathrm{d}t+\mathrm{d}w,\quad\hat{x}(0)=x,\label{eq:4-8}\\
-\dfrac{d\hat{p}^{k}}{dt}+\alpha\hat{p}^{k}=g_{x}^{*}(\hat{x}^{k},\hat{\theta}^{k})\hat{p}^{k}+f_{x}(\hat{x}^{k},\hat{\theta}^{k}),
\end{cases}
\end{equation}
\[
\hat{\theta}^{k+1}(t)=\hat{\theta}^{k}(t)-\rho^{k}(t)\mathbb{E}[g_{\theta}^{*}(\hat{x}^{k},\hat{\theta}^{k})\hat{p}^{k}+f_{\theta}(\hat{x}^{k},\hat{\theta}^{k})],\:t\text{-a.e.},
\]
where $\rho^{k}(t)$ minimizes in $\rho$ the following function 
\[
\mathbb{E}\,\hat{[p}(t)\cdot g(\hat{x}(t),\theta)+f(\hat{x}(t),\theta)],\:\text{in which }\theta=\hat{\theta}^{k}(t)-\rho\mathbb{E}[g_{\theta}^{*}(\hat{x}^{k},\hat{\theta}^{k})\hat{p}^{k}+f_{\theta}(\hat{x}^{k},\hat{\theta}^{k})].
\]

\subsection{Non-parametric approach for the value function and its gradient}

First, we notice that the value function $u(x),$ the gradient $\lambda(x)=Du(x)$
and the optimal feedback $\hat{a}(x)$ are linked as follows\footnote{\eqref{eq:4-9}$_{1}$ is the HJB system and \eqref{eq:4-9}$_{2}$
follows by differentiating \eqref{eq:4-9}$_{1}$ with respect to
$x$.} 
\begin{equation}
\begin{cases}
\alpha u(x)=\lambda(x)\cdot g(x,\hat{a}(x))+f(x,\hat{a}(x))+\dfrac{1}{2}\text{tr}(D\lambda(x)),\\
\alpha\lambda(x)=D\lambda(x)g(x,\hat{a}(x))+D_{x}^{*}g(x,\hat{a}(x))\lambda(x)+D_{x}f(x,\hat{a}(x))+\dfrac{1}{2}\Delta\lambda(x),
\end{cases}\label{eq:4-9}
\end{equation}
where 
\[
\hat{a}(x)\:\text{minimizes in }a\text{ of the function }\:\lambda(x)\cdot g(x,a)+f(x,a).
\]
The above system has an interesting structure, in which there is coupling
only for the last two equations. The first equation allows one to
define the value function. Note that we have used the fact that $D\lambda(x)=(D\lambda(x))^{*}.$ 

We now define the following iteration: suppose that we know $(\hat{a}^{k}(x),\lambda^{k}(x))$,
we can find $\lambda^{k+1}(x)$ by solving the differential equation
system: 
\begin{equation}
\alpha\lambda^{k+1}(x)-D\lambda^{k+1}(x)g(x,\hat{a}^{k}(x))-\dfrac{1}{2}\Delta\lambda^{k+1}(x)=D_{x}^{*}g(x,\hat{a}^{k}(x))\lambda^{k}(x)+D_{x}f(x,\hat{a}^{k}(x))\label{eq:4-10}
\end{equation}
such that 
\[
\hat{a}^{k+1}(x)\:\text{minimizes in }a\text{ of the function }\lambda^{k+1}(x)\cdot g(x,a)+f(x,a).
\]
The equations for the components of $\lambda^{k+1}(x)$ are completely
decoupled, and can be solved in parallel. One possibility is to use
simulation to define $\lambda^{k+1}(x)$ in a finite number of points
and to use an extrapolation by a kernel method.

\section{Focus on the deterministic case}

In this section, we shall simplify by considering the case of a deterministic
dynamics. Some theoretical and numerical results will be presented
to illustrate the efficiency of the numerical algorithms. 

\subsection{Problem and algorithm}

We first define the relation between the three functions $u(x)$,
$\lambda(x)$ and $\hat{a}(x)$ (as the special case of \eqref{eq:4-9}):
\begin{equation}
\begin{cases}
\alpha u(x)=f(x,\hat{a}(x))+\lambda(x)\cdot g(x,\hat{a}(x)),
\\
\alpha\lambda(x)=D\lambda(x)g(x,\hat{a}(x))+D_{x}^{*}g(x,\hat{a}(x))\lambda(x)+D_{x}f(x,\hat{a}(x)),
\end{cases}\label{eq:6-1}
\end{equation}
where 
\[
\hat{a}(x)\:\text{minimizes in }a\text{ of the function }\lambda(x)\cdot g(x,a)+f(x,a).
\]
We propose two iterations. 
\begin{itemize}
\item[i)] The first one is: for given functions $(\hat{a}^{k}(x),\lambda^{k}(x))$,
we define $u^{k}(x)$ as 
\begin{equation}
\alpha u^{k}(x)=f(x,\hat{a}^{k}(x))+\lambda^{k}(x)\cdot g(x,\hat{a}^{k}(x)).\label{eq:5-103}
\end{equation}
Now, we find $\lambda^{k+1}(x)$ by solving 
\begin{equation}
\alpha\lambda^{k+1}(x)-D\lambda^{k+1}(x)g(x,\hat{a}^{k}(x))=D_{x}^{*}g(x,\hat{a}^{k}(x))\lambda^{k}(x)+D_{x}f(x,\hat{a}^{k}(x)).\label{eq:5-101}
\end{equation}
We next resolve $\hat{a}^{k+1}(x)$ by minimizing 
\[
\text{the function }\lambda^{k+1}(x)\cdot g(x,a)+f(x,a)\text{ in }a,
\]
and $u^{k+1}(x)$ is constructed by 
\begin{equation}
\alpha u^{k+1}(x)=f(x,\hat{a}^{k+1}(x))+\lambda^{k+1}(x)\cdot g(x,\hat{a}^{k+1}(x)).\label{eq:5-102}
\end{equation}
\item[ii)] The second one is to describe the policy iteration as follows: given
the functions $(\hat{a}^{k}(x),u^{k}(x))$, we then set 
\begin{equation}
\lambda^{k}(x)=Du^{k}(x).\label{eq:5-104}
\end{equation}
We obtain $u^{k+1}(x)$ by solving 
\begin{equation}
\alpha u^{k+1}(x)=f(x,\hat{a}^{k}(x))+Du^{k+1}(x)\cdot g(x,\hat{a}^{k}(x)).\label{eq:5-105}
\end{equation}
We now set 
\begin{equation}
\lambda^{k+1}(x)=Du^{k+1}(x),\label{eq:5-106}
\end{equation}
and the values of the function $\hat{a}^{k+1}(x)$ can be obtained
by minimizing 
\begin{equation}
\text{the function }\lambda^{k+1}(x)\cdot g(x,a)+f(x,a)\text{ in }a.\label{eq:5-107}
\end{equation}
\end{itemize}
Since $\hat{a}^{k}(x)$ satisfies the necessary condition of optimality
\begin{equation}
D_{a}f(x,\hat{a}^{k}(x))+(D_{a}g)^{*}(x,\hat{a}^{k}(x))\lambda^{k}(x)=0,\label{eq:5-150}
\end{equation}
we can use a gradient descent method 
\begin{equation}
\hat{a}^{k+1}(x)=\hat{a}^{k}(x)-\theta^{k+1}[D_{a}f(x,\hat{a}^{k}(x))+(D_{a}g)^{*}(x,\hat{a}^{k}(x))\lambda^{k+1}(x)].\label{eq:5-151}
\end{equation}
The suitable scalar $\theta^{k+1}$ can now be obtained by a one-dimensional
optimization problem by setting 
\begin{equation}
w^{k+1}(\theta)(x)=\hat{a}^{k}(x)-\theta[D_{a}f(x,\hat{a}^{k}(x))+(D_{a}g)^{*}(x,\hat{a}^{k}(x))\lambda^{k+1}(x)],\label{eq:5-152}
\end{equation}
and 
\begin{equation}
H^{k+1}(\theta)(x)=f(x,w^{k+1}(\theta)(x))+\lambda^{k+1}(x)\cdot g(x,w^{k+1}(\theta)(x)),\label{eq:5-153}
\end{equation}
where we can now find $\theta^{k+1}$ by minimizing the function $H^{k+1}(\theta)(x)$
in $\theta$. As a result, $\theta^{k+1}$ depends on $x$ and plugging
back in \eqref{eq:5-151} to obtain $\hat{a}^{k+1}(x)$.

\subsection{Splitting up method}

\label{Sec:Iter}

As a part of both iterations \eqref{eq:5-101} and \eqref{eq:5-105}
described above we have to solve a generic linear PDE 
\begin{equation}
\alpha\lambda(x)-D\lambda(x)\cdot G(x)=F(x).\label{eq:5-108}
\end{equation}
For which we propose a parallel splitting up method\footnote{The parallel splitting up method not only reduces the original problems
into a number of separable one dimensional linear problems, but also
enables us to compute all these one dimensional linear problems by
parallel computing, for which the calibration of the fractional steps
are independent of each other \cite{LNT91}.}: knowing $\lambda^{j}(x)$, and writing $x=(x_{1},\cdots,x_{d})$,
we define $\lambda^{j+\frac{l}{d}}(x),$ $l=1,\cdots,d$, by 
\begin{equation}
\alpha\lambda^{j+\frac{l}{d}}(x)-\dfrac{\partial\lambda^{j+\frac{l}{d}}(x)}{\partial x_{l}}G_{l}(x)=Z^{j+\frac{l}{d}}(x),
\label{eq:5-109}
\end{equation}
where $Z^{j+\frac{l}{d}}(x)=\sum_{h\not=l}\dfrac{\partial\lambda^{j}(x)}{\partial x_{h}}G_{h}(x)+F(x)$.
And $\lambda^{j+1}(x)$ is defined by 
\begin{equation}
\lambda^{j+1}(x)=\frac{1}{d}\sum_{l=1}^{d}\lambda^{j+\frac{l}{d}}(x).\label{eq:5-110}
\end{equation}
Note that \eqref{eq:5-109} is a one dimensional first order differential
equation, which has an explicit solution 
\begin{equation}
\lambda^{j+\frac{l}{d}}(x)=-\int_{-\infty}^{x_{l}}\dfrac{Z^{j+\frac{l}{d}}(\xi_{l},\bar{x}_{l})}{G_{l}(\xi_{l},\bar{x}_{l})}\exp\left(\alpha\int_{\xi_{l}}^{x_{l}}\dfrac{\mathrm{d}\eta_{l}}{G_{l}(\eta_{l},\bar{x}_{l})}\right)\mathrm{d}\xi_{l}.\label{eq:5-112}
\end{equation}
Here, we have used the notation $x=(x_{l},\bar{x}_{l})$ where $\bar{x}_{l}\in\mathbb{R}^{d-1}$. 

\section{CONVERGENCE RESULTS}

\subsection{Setting of the problem }

We take 

\begin{equation}
g(x,a)=A(x)+Ba,\label{eq:7-1}
\end{equation}
such that
\[
x\mapsto A(x):\mathbb{R}^{n}\rightarrow \mathbb{R}^{n},|A(x)|\leq\gamma|x|,\:B\in\mathcal{L}(\mathbb{R}^{d};\mathbb{R}^{n}),
\]
and
\[
||DA(x_{1})-DA(x_{2})||\leq\dfrac{b|x_{1}-x_{2}|}{1+|x_{1}|+|x_{2}|}.
\]
The pay-off functional
\begin{equation}
f(x,a)=F(x)+\dfrac{1}{2}a^{*}Na,\label{eq:7-2}
\end{equation}

\[
x\mapsto F(x):\mathbb{R}^{n}\rightarrow \mathbb{R},\:|DF(x)|\leq M|x|,
\]

\[
N\in\mathcal{L}(\mathbb{R}^{d};\mathbb{R}^{d}),\text{ symmetric and invertible.}
\]
We then have 

\begin{equation}
\hat{a}(x)=-N^{-1}B^{*}\lambda(x),\label{eq:7-3}
\end{equation}
and thus the second relation (\ref{eq:6-1}) becomes 

\begin{equation}
\alpha\lambda(x)-DA^{*}(x)\lambda(x)-D\lambda(x)(A(x)-BN^{-1}B^{*}\lambda(x))=DF(x).\label{eq:7-4}
\end{equation}

\subsection{Preliminaries}

We will need conditions on $\alpha$ and $b$: $\alpha$ sufficiently
large and $b$ sufficiently small. We first assume that

\begin{equation}
\alpha-2\gamma>2\sqrt{M||BN^{-1}B^{*}||}.\label{eq:7-5}
\end{equation}
We set 

\begin{equation}
\beta=\dfrac{(\alpha-2\gamma)^{2}}{4M||BN^{-1}B^{*}||}>1.\label{eq:7-6}
\end{equation}
We define 

\begin{equation}
\varpi=\dfrac{\alpha-2\gamma-\sqrt{(\alpha-2\gamma)^{2}-4M||BN^{-1}B^{*}||}}{2||BN^{-1}B^{*}||},\label{eq:7-7}
\end{equation}
which is a solution of 

\begin{equation}
\varpi^{2}||BN^{-1}B^{*}||-(\alpha-2\gamma)\varpi+M=0.\label{eq:7-8}
\end{equation}
We next need to solve for the equation 

\begin{equation}
\nu^{2}||BN^{-1}B^{*}||-(\alpha-2\gamma)\nu+(M+b\varpi)=0,\;\nu>\varpi.\label{eq:7-9}
\end{equation}
We need that

\begin{equation}
b<\sqrt{\dfrac{||BN^{-1}B^{*}||}{M}}\:\dfrac{\beta-1}{\sqrt{\beta}-\sqrt{\beta-1}}.\label{eq:7-10}
\end{equation}
We then define 

\begin{equation}
\nu=\dfrac{\alpha-2\gamma-\sqrt{(\alpha-2\gamma)^{2}-4(M+b\varpi)||BN^{-1}B^{*}||}}{2||BN^{-1}B^{*}||}.\label{eq:7-11}
\end{equation}

\subsection{Main Result}

We can state the 
\begin{thm}
\label{theo7-1} We assume (\ref{eq:7-1}), (\ref{eq:7-2}), (\ref{eq:7-5}) and (\ref{eq:7-10}).
Then equation (\ref{eq:7-4}) has a unique solution such that 

\begin{equation}
|\lambda(x)|\leq\varpi|x|,\:||D\lambda(x)||\leq\nu.\label{eq:7-12}
\end{equation}
\end{thm}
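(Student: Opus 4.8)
The plan is to realize $\lambda$ as the fixed point of the linear‑equation iteration \eqref{eq:5-101}, which under the structure \eqref{eq:7-1}--\eqref{eq:7-3} reads: given a vector field $\mu$, let $\mathcal{T}(\mu)=\lambda$ solve the linear first‑order equation
\[
\alpha\lambda(x)-D\lambda(x)\bigl(A(x)-BN^{-1}B^{*}\mu(x)\bigr)=DA^{*}(x)\mu(x)+DF(x),
\]
so that a fixed point of $\mathcal{T}$ is exactly a solution of \eqref{eq:7-4}. First I would solve this linear equation by the method of characteristics, in the spirit of \eqref{eq:5-112}: letting $X_{t}=X_{t}(x)$ be the flow of $\dot{X}_{t}=A(X_{t})-BN^{-1}B^{*}\mu(X_{t})$ with $X_{0}=x$, one obtains
\[
\lambda(x)=\int_{0}^{\infty}e^{-\alpha t}\bigl(DA^{*}(X_{t})\mu(X_{t})+DF(X_{t})\bigr)\,\mathrm{d}t .
\]
The flow is globally defined because the right‑hand side has at most linear growth, and the integral converges because, by Gronwall, $|X_{t}|$ grows at most exponentially at a rate that \eqref{eq:7-5} will keep strictly below $\alpha$.

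Next I would introduce the class $\mathcal{C}=\{\mu\in C(\mathbb{R}^{n};\mathbb{R}^{n}):|\mu(x)|\le\varpi|x|,\ \mathrm{Lip}(\mu)\le\nu\}$, metrized by $d(\mu_{1},\mu_{2})=\sup_{x\neq0}|\mu_{1}(x)-\mu_{2}(x)|/|x|$ (a complete metric on $\mathcal{C}$), and show $\mathcal{T}(\mathcal{C})\subseteq\mathcal{C}$. Inserting $|\mu(y)|\le\varpi|y|$ and the bounds \eqref{eq:7-1}--\eqref{eq:7-2} into the representation, together with $|X_{t}|\le e^{(\gamma+\varpi\|BN^{-1}B^{*}\|)t}|x|$, gives
\[
|\lambda(x)|\le\frac{\gamma\varpi+M}{\alpha-\gamma-\varpi\|BN^{-1}B^{*}\|}\,|x| ,
\]
and requiring this to be $\le\varpi|x|$ is precisely $\|BN^{-1}B^{*}\|\varpi^{2}-(\alpha-2\gamma)\varpi+M\le0$, which holds because $\varpi$ in \eqref{eq:7-7} is the smaller root of \eqref{eq:7-8} and \eqref{eq:7-5} makes that root real and positive. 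For the derivative bound I would estimate $|\lambda(x)-\lambda(x')|$ by comparing the two trajectories issuing from $x$ and $x'$: a Gronwall estimate gives $|X_{t}(x)-X_{t}(x')|\le e^{(\gamma+\nu\|BN^{-1}B^{*}\|)t}|x-x'|$, and in the difference $DA^{*}(X_{t}(x))\mu(X_{t}(x))-DA^{*}(X_{t}(x'))\mu(X_{t}(x'))$ the term carrying the growth of $\mu$ is handled by the Lipschitz hypothesis on $DA$, whose denominator $1+|x_{1}|+|x_{2}|$ absorbs the factor $|X_{t}(x')|$ and leaves only the bounded constant $b\varpi$; one is left with $\mathrm{Lip}(\lambda)\le(\gamma\nu+M+b\varpi)/(\alpha-\gamma-\nu\|BN^{-1}B^{*}\|)$, and asking this to be $\le\nu$ is $\|BN^{-1}B^{*}\|\nu^{2}-(\alpha-2\gamma)\nu+(M+b\varpi)\le0$. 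The quadratic $q(\nu):=\|BN^{-1}B^{*}\|\nu^{2}-(\alpha-2\gamma)\nu+(M+b\varpi)$ satisfies $q(\varpi)=b\varpi>0$ with $q'(\varpi)<0$, so $\varpi$ lies to the left of both its roots; hence $q$ has a root $\nu>\varpi$ as soon as its discriminant is nonnegative, which is the content of the smallness condition \eqref{eq:7-10}, and one takes $\nu$ as in \eqref{eq:7-11}. (Since $DA$ and $DF$ are only Lipschitz‑type, these are estimates of difference quotients, $D\lambda$ being the a.e.\ derivative of the Lipschitz map $\lambda$.)

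Finally I would show $\mathcal{T}$ is a strict contraction on $(\mathcal{C},d)$. For $\mu_{1},\mu_{2}\in\mathcal{C}$, subtracting the two linear equations defining $\lambda_{i}=\mathcal{T}(\mu_{i})$ and transporting the difference $\delta=\lambda_{1}-\lambda_{2}$ along the single flow $\dot{Y}_{t}=A(Y_{t})-BN^{-1}B^{*}\mu_{1}(Y_{t})$ gives
\[
\delta(x)=\int_{0}^{\infty}e^{-\alpha t}\bigl(DA^{*}(Y_{t})-D\lambda_{2}(Y_{t})\,BN^{-1}B^{*}\bigr)(\mu_{1}-\mu_{2})(Y_{t})\,\mathrm{d}t ,
\]
so that $d(\mathcal{T}(\mu_{1}),\mathcal{T}(\mu_{2}))\le\kappa\,d(\mu_{1},\mu_{2})$ with $\kappa=(\gamma+\nu\|BN^{-1}B^{*}\|)/(\alpha-\gamma-\varpi\|BN^{-1}B^{*}\|)$, and $\kappa<1$ because \eqref{eq:7-5} and \eqref{eq:7-10} force both $\varpi\|BN^{-1}B^{*}\|$ and $\nu\|BN^{-1}B^{*}\|$ to lie below $(\alpha-2\gamma)/2$. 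Banach's fixed point theorem then yields a unique $\lambda\in\mathcal{C}$ with $\mathcal{T}(\lambda)=\lambda$, i.e.\ a unique solution of \eqref{eq:7-4} obeying \eqref{eq:7-12}. I expect the real obstacle to be the derivative estimate $\|D\lambda(x)\|\le\nu$ of the middle step: unlike the pointwise bound on $|\lambda|$, it forces one to balance the exponential growth of the variational flow against the additional term $b\varpi$ generated by the variation of $DA^{*}$ along trajectories, and it is exactly this balance — reflected in the relation $q(\varpi)=b\varpi$ and in the threshold \eqref{eq:7-10} on $b$ — that must be made to close while respecting the limited smoothness of $A$ and $F$; by contrast the $C^{0}$ bound and the contraction estimate are comparatively routine exponential book‑keeping.
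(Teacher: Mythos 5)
Your proposal is correct and follows essentially the same route as the paper: the paper's map $\mathcal{T}$ is defined by exactly your characteristics formula $\Gamma(x)=\int_{0}^{+\infty}e^{-\alpha s}\left(DF(y(s))+DA^{*}(y(s))\lambda(y(s))\right)\mathrm{d}s$ on the same class $\mathcal{C}=\{|\lambda(x)|\le\varpi|x|,\ \|D\lambda(x)\|\le\nu\}$ with the same weighted norm $\sup_{x}|\lambda(x)|/|x|$, the invariance of $\mathcal{C}$ is closed using the same quadratics \eqref{eq:7-8} and \eqref{eq:7-9}, and the contraction constant $(\gamma+\|BN^{-1}B^{*}\|\nu)/(\alpha-\gamma-\|BN^{-1}B^{*}\|\varpi)<1$ is the paper's \eqref{eq:7-20}--\eqref{eq:7-21}. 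The only cosmetic difference is that in the contraction step you transport $\lambda_{1}-\lambda_{2}$ along a single flow, whereas the paper compares the two flows $y_{1},y_{2}$ via a Gronwall estimate; both yield the same constant.
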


\begin{proof}
We will use a contraction mapping argument. Let $\lambda(x)$ be a
vector of functions satisfying (\ref{eq:7-12}). We shall define a
function $\Gamma(x)$ as follows. We consider the differential equation 

\begin{equation}
\left\{\begin{aligned}
\dfrac{dy}{ds}&=A(y)-BN^{-1}B^{*}\lambda(y),\\
y(0)&=x.\end{aligned}\right.\label{eq:7-13}
\end{equation}

Since $A(x)$ and $\lambda(x)$ are uniformly Lipschitz, this equation
has a unique solution. We then define $\Gamma(x)$ by the formula 

\begin{equation}
\Gamma(x)=\int_{0}^{+\infty}\exp(-\alpha s)\left(DF(y(s))+DA^{*}(y(s))\lambda(y(s))\right)ds.\label{eq:7-14}
\end{equation}
This integral is well-defined. Indeed, from (\ref{eq:7-13}), the
second assumption (\ref{eq:7-1}) and the first property (\ref{eq:7-12}), we can assert that 

\begin{equation}
|y(s)|\leq|x|\exp(\gamma+||BN^{-1}B^{*}||\varpi)s\label{eq:7-15}
\end{equation}
and, from (\ref{eq:7-14}) we get 

\begin{align}
|\Gamma(x)|&\leq(M+\varpi\gamma)\int_{0}^{+\infty}\exp(-\alpha s)\,|y(s)|ds\notag\\
&\leq(M+\varpi\gamma)|x|\int_{0}^{+\infty}\exp(-(\alpha-\gamma-||BN^{-1}B^{*}||\varpi)s)ds\notag\\
&=\dfrac{(M+\varpi\gamma)|x|}{\alpha-\gamma-||BN^{-1}B^{*}||\varpi}=\varpi|x|,\label{eq:7-16}
\end{align}
 from the definition of $\varpi$ of (\ref{eq:7-7}) and (\ref{eq:7-8}). In particular $\Gamma(x)$
satisfies the first property (\ref{eq:7-12}). We next differentiate
in $x$ the formula (\ref{eq:7-14}). We set $Y(s)=D_{x}y(s).$ From
Equation (\ref{eq:7-13}) we obtain 

\begin{equation}
\left\{\begin{aligned}
\dfrac{dY(s)}{ds}&=DA(y(s))Y(s)-BN^{-1}B^{*}D\lambda(y(s))Y(s),\\
Y(0)&=I,\end{aligned}\right.\label{eq:7-17}
\end{equation}

then,

\begin{equation}
D\Gamma(x)=\int_{0}^{+\infty}\exp(-\alpha s)\left(D^{2}F(y(s))+DA(y(s))D\lambda(y(s))+D^{2}A(y(s)\lambda(y(s))\right)Y(s)ds.\label{eq:7-18}
\end{equation}
So 

\[
||D\Gamma(x)||\leq(M+\gamma\nu+b\varpi)\int_{0}^{+\infty}\exp(-\alpha s)||Y(s)||ds,
\]
and from (\ref{eq:7-17}) it follows that

\begin{align*}
||D\Gamma(x)||&\leq(M+\gamma\nu+b\varpi)\int_{0}^{+\infty}\exp(-(\alpha-\gamma-||BN^{-1}B^{*}||\nu)s)\,ds\\
&=\dfrac{M+\gamma\nu+b\varpi}{\alpha-\gamma-||BN^{-1}B^{*}||\nu}=\nu,
\end{align*}
 and thus $\Gamma(x)$ satisfies the second condition (\ref{eq:7-12}). 

We consider the Banach space of functions $\lambda(x):\mathbb{R}^{n}\rightarrow \mathbb{R}^{n},$ with
the norm 

\[
||\lambda||=\sup_{x}\dfrac{|\lambda(x)|}{|x|},
\]
 and the closed subset 

\[
\mathcal{C}=\{\lambda(\cdot)|\:||\lambda||\leq\varpi,\:||D\lambda(x)||\leq\nu,\forall x\}.
\]
We consider the map $\mathcal{T}:$ $\lambda\rightarrow\Gamma,$ defined
by the formula (\ref{eq:7-14}). We want to show that it is a contraction
from $\mathcal{C}$ to $\mathcal{C}.$ We pick two functions $\lambda_{1},\lambda_{2}$
in $C$, let $y_{1}(s),y_{2}(s)$ be defined by (\ref{eq:7-13})
with $\lambda=\lambda_{1},\lambda_{2}$ respectively, and $\Gamma_{1}=\mathcal{T}\lambda_{1},\Gamma_{2}=\mathcal{T}\lambda_{2}.$
We have 

\[
\dfrac{d}{ds}(y_{1}-y_{2})=A(y_{1})-A(y_{2})-BN^{-1}B^{*}(\lambda_{1}(y_{1})-\lambda_{2}(y_{2})).
\]
 Noting that 

\begin{align*}
|\lambda_{1}(y_{1})-\lambda_{2}(y_{2})|&\leq|\lambda_{1}(y_{1})-\lambda_{2}(y_{1})|+|\lambda_{2}(y_{1})-\lambda_{2}(y_{2})|\\
&\leq||\lambda_{1}-\lambda_{2}||\,|y_{1}|+\nu|y_{1}-y_{2}|,
\end{align*}
 we get, using the estimate (\ref{eq:7-15}), 

\[\left\{\begin{aligned}
\dfrac{d}{ds}|y_{1}-y_{2}|&\leq(\gamma+\nu||BN^{-1}B^{*}||)|y_{1}-y_{2}|\\
&\quad+||BN^{-1}B^{*}||\:||\lambda_{1}-\lambda_{2}||\,|x|\exp(\gamma+\varpi||BN^{-1}B^{*}||)s,\\
(y_{1}-y_{2})(0)&=x,\end{aligned}\right.
\]
 therefore 

\[
|y_{1}(s)-y_{2}(s)|\leq||BN^{-1}B^{*}||\,||\lambda_{1}-\lambda_{2}||\,|x|\exp((\gamma+\nu||BN^{-1}B^{*}||)s)\int_{0}^{s}\exp(-(\nu-\varpi)||BN^{-1}B^{*}||\tau)\,d\tau.
\]
 Finally, we obtain that

\begin{equation}
|y_{1}(s)-y_{2}(s)|\leq\dfrac{||\lambda_{1}-\lambda_{2}||\,|x|}{\nu-\varpi}(\exp((\gamma+\nu||BN^{-1}B^{*}||)s)-\exp((\gamma+\varpi||BN^{-1}B^{*}||)s)).\label{eq:7-19}
\end{equation}
 Next, from the definition of $\Gamma(x),$ we get 

\begin{align*}
\Gamma_{1}(x)-\Gamma_{2}(x)&=\int_{0}^{+\infty}\exp(-\alpha s)[DF(y_{1}(s))-DF(y_{2}(s))\\
&\quad\quad+DA^{*}(y_{1}(s))\lambda_{1}(y_{1}(s))-DA^{*}(y_{2}(s))\lambda_{2}(y_{2}(s))]ds.
\end{align*}
Writing 

\begin{align*}
&DA^{*}(y_{1}(s))\lambda_{1}(y_{1}(s))-DA^{*}(y_{2}(s))\lambda_{2}(y_{2}(s))\\
=\ &\left(DA^{*}(y_{1}(s))-DA^{*}(y_{2}(s))\right)\lambda_{1}(y_{1}(s))+DA^{*}(y_{1}(s))(\lambda_{1}(y_{1}(s))-\lambda_{2}(y_{2}(s))).
\end{align*}
 From the third line of assumption (\ref{eq:7-1}) we obtain 

\[
|\left(DA^{*}(y_{1}(s))-DA^{*}(y_{2}(s))\right)\lambda_{1}(y_{1}(s))|\leq b\varpi|y_{1}(s)-y_{2}(s)|.
\]
 Moreover 

\[
|\lambda_{1}(y_{1}(s))-\lambda_{2}(y_{2}(s))|\leq\nu|y_{1}(s)-y_{2}(s)|+||\lambda_{1}-\lambda_{2}||\,|x|\exp((\gamma+\varpi||BN^{-1}B^{*}||)s).
\]
 Collecting results, we can write 

\begin{align*}
|\Gamma_{1}(x)-\Gamma_{2}(x)|&\leq(M+b\varpi+\gamma\nu)\int_{0}^{+\infty}\exp(-\alpha s)\:|y_{1}(s)-y_{2}(s)|ds\\
&\quad+\dfrac{\gamma||\lambda_{1}-\lambda_{2}||\,|x|}{\alpha-\gamma-||BN^{-1}B^{*}||\varpi}.
\end{align*}
 Making use of (\ref{eq:7-19}), it follows that

\begin{align*}
|\Gamma_{1}(x)-\Gamma_{2}(x)|&\leq\dfrac{\gamma||\lambda_{1}-\lambda_{2}||\,|x|}{\alpha-\gamma-||BN^{-1}B^{*}||\varpi}+(M+b\varpi+\gamma\nu)\\
&\leq\dfrac{||\lambda_{1}-\lambda_{2}||\,|x|}{\nu-\varpi}\left[\dfrac{1}{\alpha-\gamma-||BN^{-1}B^{*}||\nu}-\dfrac{1}{\alpha-\gamma-||BN^{-1}B^{*}||\varpi}\right].
\end{align*}
Rearranging and using the definition of $\nu,$ see (\ref{eq:7-9}),
we finally obtain that

\begin{equation}
||\Gamma_{1}-\Gamma_{2}||\leq\dfrac{\gamma+||BN^{-1}B^{*}||\nu}{\alpha-\gamma-||BN^{-1}B^{*}||\varpi}||\lambda_{1}-\lambda_{2}||.\label{eq:7-20}
\end{equation}
We need to check that 

\begin{equation}
\dfrac{\gamma+||BN^{-1}B^{*}||\nu}{\alpha-\gamma-||BN^{-1}B^{*}||\varpi}<1\label{eq:7-21}
\end{equation}
 which is equivalent to 

\[
\alpha-2\gamma-||BN^{-1}B^{*}||(\varpi+\nu)>0,
\]
 which is true, from the definition of $\varpi$ and $\nu,$ see (\ref{eq:7-7})
and (\ref{eq:7-11}). If we call $\lambda(x)$ the unique fixed point
of $\mathcal{T}$, it satisfies (\ref{eq:7-12}) and the equation

\begin{equation}
\lambda(x)=\int_{0}^{+\infty}\exp-\alpha s\left(DF(y(s))+DA^{*}(y(s))\lambda(y(s))\right)ds.\label{eq:7-22}
\end{equation}
It is standard to check that (\ref{eq:7-12}) and (\ref{eq:7-22})
is equivalent to (\ref{eq:7-12}) and (\ref{eq:7-4}). This concludes
the proof of the Theorem.
\end{proof}

\subsection{Algorithm }

We can write the algorithm (\ref{eq:5-101}) which leads to 

\begin{equation}
\alpha\lambda^{k+1}(x)-D\lambda^{k+1}(x)(A(x)-BN^{-1}B^{*}\lambda^{k}(x))=DF(x)+DA^{*}(x)\lambda^{k}(x).\label{eq:7-23}
\end{equation}
From the contraction property obtained in Theorem \ref{theo7-1}, we
can obtain immediately 
\begin{cor}
\label{cor7-1}Under the assumptions of Theorem \ref{theo7-1}, if
we start the iteration with $\lambda_{0}$ such that $|\lambda_{0}(x)|\leq\varpi|x|$
and $||D\lambda_{0}(x)||\leq\nu,$ we have 

\begin{equation}
||\lambda^{k}-\lambda||\rightarrow0,\label{eq:7-24}
\end{equation}
 where $\lambda$ is the solution of (\ref{eq:7-4}).
\end{cor}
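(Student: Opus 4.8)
The plan is to recognize the iteration \eqref{eq:7-23} as the fixed point iteration $\lambda^{k+1}=\mathcal T\lambda^k$ for the very map $\mathcal T$ constructed in the proof of Theorem \ref{theo7-1}, and then simply invoke the Banach fixed point theorem. First I would observe that \eqref{eq:7-23} is precisely the PDE characterization of $\Gamma=\mathcal T\lambda^k$: indeed, running the characteristics \eqref{eq:7-13} with $\lambda=\lambda^k$ and defining $\Gamma$ by the integral \eqref{eq:7-14}, one checks — exactly as in the last sentence of the proof of Theorem \ref{theo7-1}, where \eqref{eq:7-22} was shown equivalent to \eqref{eq:7-4} — that $\Gamma$ solves $\alpha\Gamma(x)-D\Gamma(x)(A(x)-BN^{-1}B^*\lambda^k(x))=DF(x)+DA^*(x)\lambda^k(x)$, which is \eqref{eq:7-23} with $\Gamma=\lambda^{k+1}$. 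So the iteration is well-defined and produces exactly $\lambda^{k+1}=\mathcal T\lambda^k$.

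Next I would note that the hypothesis $|\lambda_0(x)|\le\varpi|x|$, $\|D\lambda_0(x)\|\le\nu$ says precisely $\lambda_0\in\mathcal C$, the closed subset of the Banach space on which $\mathcal T$ was shown to be a contraction mapping $\mathcal C\to\mathcal C$ with contraction constant $q:=\dfrac{\gamma+\|BN^{-1}B^*\|\nu}{\alpha-\gamma-\|BN^{-1}B^*\|\varpi}<1$, see \eqref{eq:7-20} and \eqref{eq:7-21}. Since $\mathcal C$ is complete (a closed subset of a Banach space) and $\mathcal T$ maps it into itself as a $q$-contraction, the iterates stay in $\mathcal C$ and converge geometrically to the unique fixed point $\lambda$ of $\mathcal T$ in $\mathcal C$, which by Theorem \ref{theo7-1} is the unique solution of \eqref{eq:7-4} satisfying \eqref{eq:7-12}. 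Concretely, $\|\lambda^k-\lambda\|\le q^k\|\lambda_0-\lambda\|\to 0$, giving \eqref{eq:7-24}.

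There is really no substantial obstacle here; the work was all done in Theorem \ref{theo7-1}. The only point requiring a line of care is the identification of \eqref{eq:7-23} as $\lambda^{k+1}=\mathcal T\lambda^k$ — one must confirm that the coefficient of the transport term in \eqref{eq:7-23} is $A(x)-BN^{-1}B^*\lambda^k(x)$, matching the drift in the characteristic ODE \eqref{eq:7-13} evaluated at $\lambda=\lambda^k$, and that the right-hand side $DF(x)+DA^*(x)\lambda^k(x)$ matches the integrand of \eqref{eq:7-14} at $s=0$ along those characteristics — after which the equivalence of the integral representation with the PDE is exactly the ``standard check'' already cited at the end of the proof of Theorem \ref{theo7-1}. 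I would write the corollary's proof in three short sentences along these lines.
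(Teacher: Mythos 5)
Your proposal is correct and follows exactly the route the paper intends: the paper derives the corollary ``immediately'' from the contraction property of $\mathcal{T}$ established in Theorem \ref{theo7-1}, which is precisely your identification of \eqref{eq:7-23} with the fixed-point iteration $\lambda^{k+1}=\mathcal{T}\lambda^{k}$ on $\mathcal{C}$ followed by the Banach fixed point theorem. You merely spell out the details the paper leaves implicit.
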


\subsection{Linear Quadratic case }

We take $A(x)=Ax,\:F(x)=\dfrac{1}{2}x^{*}Mx,$ then equation (\ref{eq:7-4})
becomes 

\begin{equation}
\alpha\lambda(x)=Mx+A^{*}\lambda(x)+D\lambda(x)(Ax-BN^{-1}B^{*}\lambda(x)),\label{eq:7-25}
\end{equation}
and its solution is $\lambda(x)=Px,$ with $P$ the solution of the Riccati
equation 

\begin{equation}
\alpha P=M+A^{*}P+PA-PBN^{-1}B^{*}.\label{eq:7-26}
\end{equation}
We have $\gamma=||A||$ and $b=0.$ Assumption (\ref{eq:7-5}) becomes 

\begin{equation}
\alpha>2||A||+2\sqrt{M||BN^{-1}B^{*}||}.\label{eq:7-27}
\end{equation}
We have 

\begin{equation}
\varpi=\nu=\dfrac{\alpha-2||A|-\sqrt{(\alpha-2||A|)^{2}-4M||BN^{-1}B^{*}||}}{2||BN^{-1}B^{*}||}.\label{eq:7-28}
\end{equation}
The iteration (\ref{eq:7-23}) becomes $\lambda^{k}(x)=P^{k}x$, with 

\begin{equation}
P^{k+1}(\alpha I-A+BN^{-1}B^{*}P^{k})=M+A^{*}P^{k},\label{eq:7-29}
\end{equation}
and if $||P^{0}||\leq\varpi,$ we obtain $||P^{k}-P||\rightarrow0,$ as
$k\rightarrow+\infty.$

\section{NUMERICAL RESULTS}
We now present numerical tests for the algorithm.
We consider the following values for $m,n$: $m = 10, n = 30;$ the matrices  $M,N,A,B$ are chosen arbitrarily and their values are not displayed here. 

In Figure 1,  we choose $\alpha =1$ and we pick 4 samples of the initial guess $P^{(0)}$. These choices do not satisfy the two conditions \eqref{eq:7-5} and \eqref{eq:7-6}.  Using the results of our Python code, we display the difference between $\|P^{(5)}-P^{(6)}\|,$  $\|P^{(10)}-P^{(11)}\|,$ $\|P^{(15)}-P^{(16)}\|,$ $\|P^{(20)}-P^{(21)}\|$, $\|P^{(25)}-P^{(26)}\|$. We can see that as the number of iterations $k$  increases, the difference between $\|P^{(k)}-P^{(k+1)}\|$ does not become small. This shows that the algorithm does not converge.

In Figure 2,  we choose $\alpha=1770.3688$, $0<\|P^{(0)} \|< 6.8153$ for the 4 samples of  the initial guess $P^{(0)}$. These choices satisfy the two conditions \eqref{eq:7-5} and \eqref{eq:7-6}. We can see that as the number of iterations $k$   increases, the difference between $\|P^{(k)}-P^{(k+1)}\|$  become small very small. And after $15$ iterations, these differences are essentially $0$.

In Figure 3,  we choose $\alpha=225$. In this test, the condition \eqref{eq:7-5} corresponds to $\alpha>270$.  Using the results of our Python code, we display   the difference between $\|P^{(k)}-P^{(k+1)}\|$. We can see that the difference does not converge to $0$  even after $10000$ iterations. Therefore, the condition \eqref{eq:7-5} is quite good.

{In Figure 4, we take the first choice of $P^{(0)}$ arbitrarily, and varies the values of $\alpha$ to be $250,300,400,500$. Using the results of our Python code, we plot the values of $\|P^{(5)}-P^{(6)}\|$, $\|P^{(10)}-P^{(11)}\|,$ $\|P^{(15)}-P^{(16)}\|$, $\|P^{(20)}-P^{(21)}\|$, $\|P^{(25)}-P^{(26)}\|$, $\|P^{(30)}-P^{(31)}\|$ for each value of $\alpha$ as a curve. We can see that the algorithm starts to converge very fast if $\alpha$ is big: the curves for $\alpha=250,300,400,500$ (red, green, orange and blue) are almost the $0$-line. }

\begin{figure}[h]
\begin{center}
\includegraphics[width=0.75\textwidth]{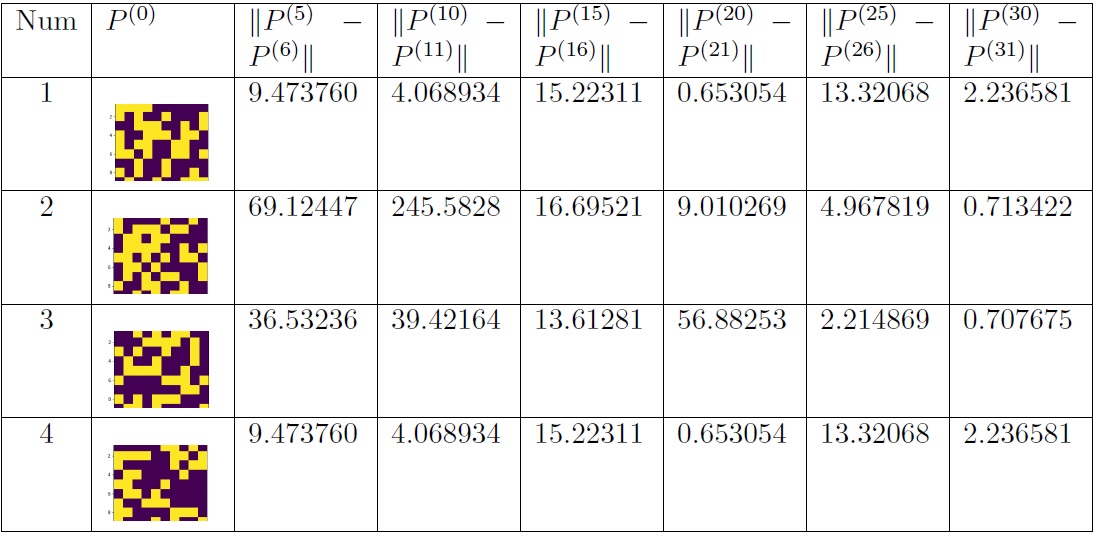}
\end{center}
\mbox{Figure 1: Solving for $P$: 4 tests  where $\alpha=1$.}
\label{Fig3}\end{figure}

\begin{figure}[h]
\begin{center}
\includegraphics[width=0.75\textwidth]{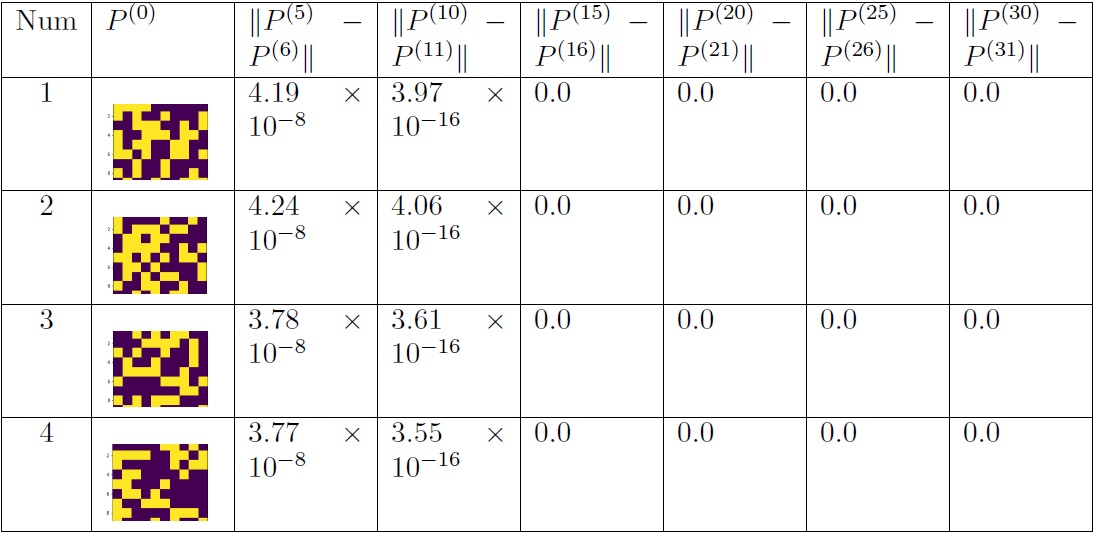}
\end{center}
\mbox{Figure 2: Solving for $P$: 4 tests  where $\alpha=1770.3688$}
\label{Fig3}\end{figure}

\begin{figure}[h]
\begin{center}
\includegraphics[width=0.99\textwidth]{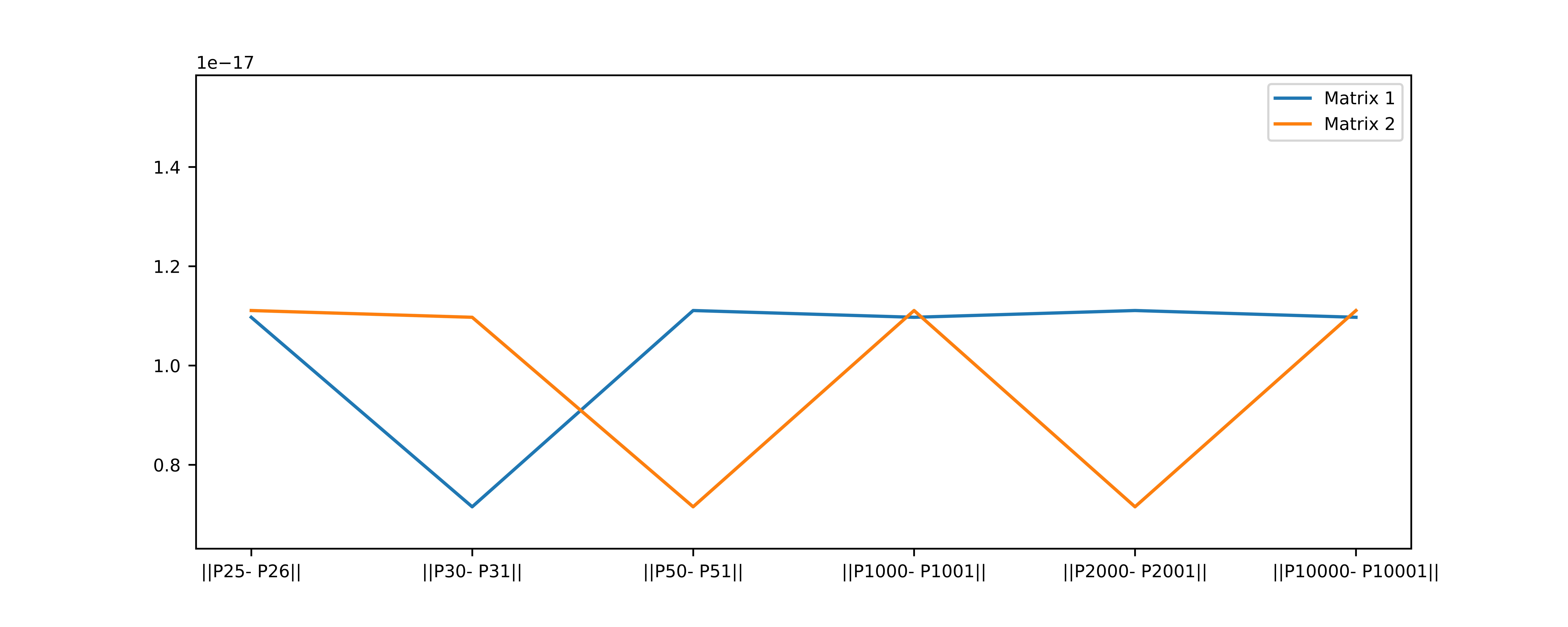}
\end{center}
\mbox{Figure 3: Solving for $P$: 4 tests  where $\alpha=225.$}
\label{Fig3}\end{figure}

\begin{figure}[h]
\begin{center}
\includegraphics[width=0.65\textwidth]{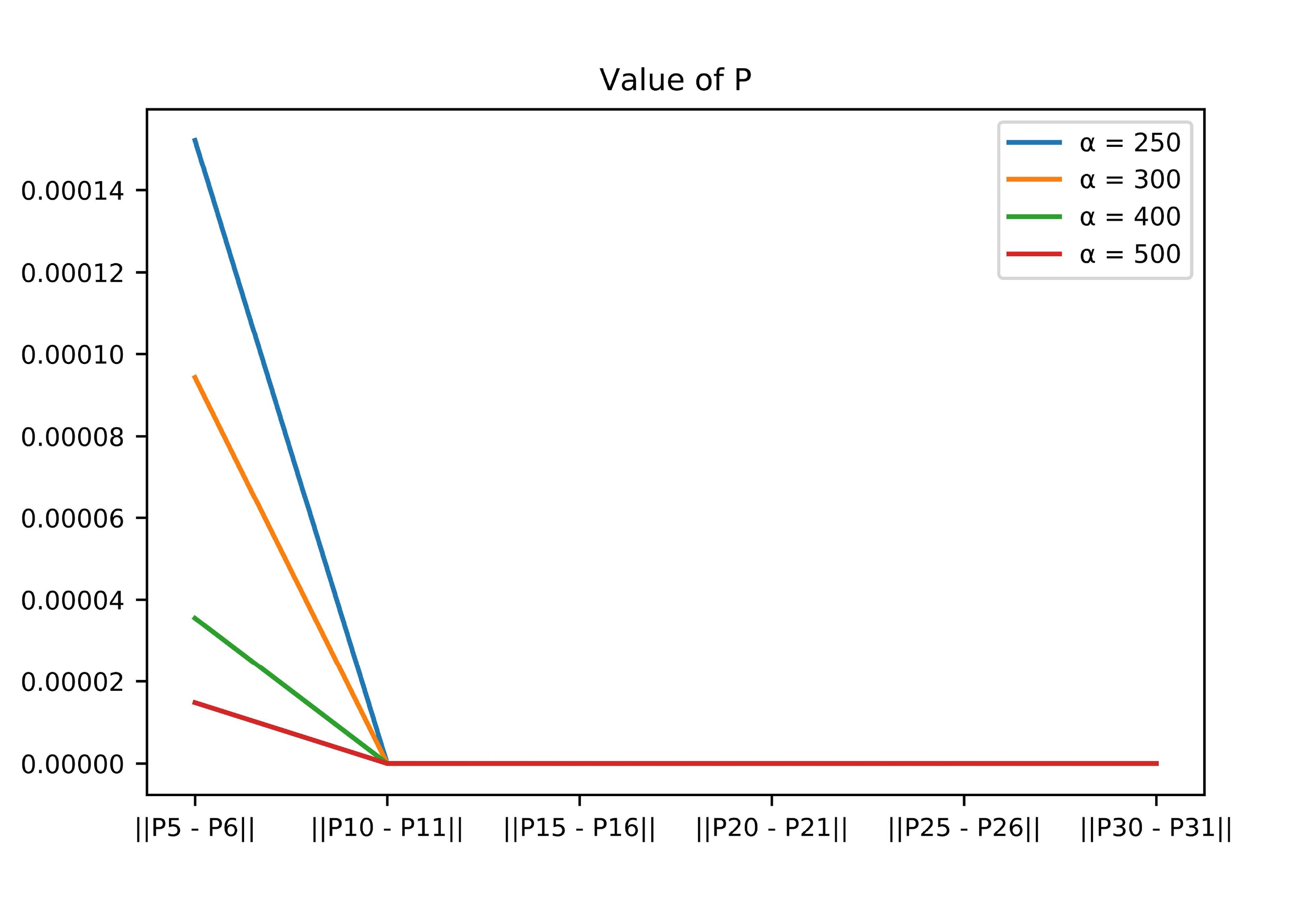}
\end{center}
\mbox{Figure 4: Solving for $P$: The convergence rate for different values of $\alpha$.}
\label{Fig3}\end{figure}

\par
\section*{Acknowledgement}
Phillip Yam acknowledges the financial supports from HKGRF-14300717 with the project title ``\textit{New kinds of Forward-backward Stochastic Systems with Applications}'', HKGRF-14300319 with the project title ``\textit{Shape-constrained Inference: Testing for Monotonicity}'', and Direct Grant for Research 2014/15 (Project No.\ 4053141) offered by CUHK. 
Xiang Zhou acknowledges the support of Hong Kong RGC GRF grants 11337216 and 11305318. Minh-Binh Tran is partially supported by NSF Grant DMS-1854453, SMU URC Grant 2020, SMU DCII Research Cluster Grant, Dedman College Linking Fellowship, Alexander von Humboldt Fellowship. Dinh Phan Cao Nguyen and Minh-Binh Tran would like to thank Prof. T. Hagstrom and Prof. A. Aceves for the computational resources.
\par

\bibliographystyle{plain}
\bibliography{MLCT}

\end{document}